\theoremstyle{thmstyleone}%
\newtheorem{theorem}{Theorem}
\newtheorem{proposition}[theorem]{Proposition}%
\theoremstyle{thmstyletwo}%
\theoremstyle{thmstylethree}%
\begin{document}

\journaltitle{Private Manuscript}
\copyrightyear{2025}
\pubyear{XXXX}
\appnotes{Original article}

\firstpage{1}


\title[AI Security \& Alignment Limitations]{Robust AI Security and Alignment: A Sisyphean Endeavor?}

\author[1,$\ast$]{Apostol Vassilev\ORCID{0000-0002-9081-3042}}

\authormark{Apostol Vassilev}

\address[1]{\orgdiv{CSD/ITL}, \orgname{NIST}, \orgaddress{\street{100 Bureau Drive, Gaithersburg}, \postcode{20899}, \state{MD}, \country{USA}}}

\corresp[$\ast$]{Address for correspondence. Apostol Vassilev, NIST, Gaithersburg, MD 20899, USA. \href{Email:a[pstol.vasso;ev@nist.gov]}{apostol.vassilev@nist.gov}}




\abstract{This manuscript establishes information-theoretic limitations for robustness of AI security and alignment by extending G\"odel's incompleteness theorem to AI.  Knowing these limitations and preparing for the challenges they bring is critically important for the responsible adoption of the AI technology. Practical approaches to dealing with these challenges are provided as well. Broader implications for cognitive reasoning limitations of AI systems are also proven.}
\keywords{AI security, AI alignment, G\"{o}del's theorem}


\maketitle

\section{Introduction}
Modern AI systems based on Large Language Models (LLMs) are trained on large corpora of data. They learn from all human experiences, good and bad, represented as text.  All organizations hosting or using LLM-based services are trying to restrict the access to undesirable information/behavior that the LLM has learned. At the same time adversaries are trying to circumvent the barriers to accessing this sort of information with the intent to misuse it downstream to harm others, cf., Misuse Violations NISTAML.018~\citep{vassilev2025}. This is known as jailbreaking. The most established mechanism by which organizations try to limit the access to bad information is based on setting guardrails in the sense of establishing rules for acceptable user input, known as prompts to the chatbot.\footnote{In some cases the AI system may accept any prompt and then examine the output from the LLM to determine if it is acceptable before releasing the information to the user. Regardless of the specific mechanism chosen, one can think of the abstract process of prompting the AI system and receiving the desired response or getting blocked as the subject to analysis in this manuscript.} The notion of \textit{alignment} of the AI System adopted here is the adherence to a way of operating such that the acceptable prompts are processed whereas the undesirable prompts are blocked. Correspondingly, the \textit{security} of the AI System is defined as the ability to withstand attacks aiming to force the behavior of the AI system out of alignment. 

So, the main question examined in this manuscript is: \textit{can one establish a robust set of guardrails to prevent the access to bad information/behavior in the AI system?} 

There is no universal definition of good and bad information/behavior in LLM's. This is why it is useful to talk about out-of-policy speech (OOPS) for a given policy $\Pi$ established by the organization hosting the AI system. A user prompt that falls under OOPS is blocked, prompts outside of OOPS are allowed to proceed.  

Every decision the AI system makes or action it takes are based on computation. So, truth and verification of it are defined and dealt with as strings and algorithms. This assumption is supported by any type of AI system, past, present and future, which in turn helps to generalize the results of the theorems in this manuscript.. Let $C(T, p)$ be a checker that returns $1$ if and only if $T$ is true and there is a verifiable proof  $p$ for it that is accessible to $C$ and the checker can verify. Naturally, since the knowledge of the AI System spans the knowledge of humanity in many fields, the space of possible truths $T$ for this is approximately as large as that for humanity.  

Generally speaking, the guardrails encompass policies, technical controls and monitoring mechanisms. In this paper they are referred to as checkers $C$. They may exist across all layers of the AI system: 

\begin{itemize}
    \item[i)] data guardrails for sanitization of training data; 
    \item[ii)]model guardrails that ensure the validation and continuous monitoring of the AI system according to established metrics, e.g., accuracy and robustness; 
    \item[iii)] application guardrails, typically implemented as computer programs running along the model, that try to enforce policies on AI generated content or restrict how AI tools function within specific workflows;
    \item[iv)] infrastructure guardrails, also implemented as computer programs running along the model, that try to provide a secure foundation for the AI system by enforcing protections at the cloud, network and systems levels.
\end{itemize}   

One may think of the collection of different types of guardrails also as the technical support for the NIST AI RMF functions \cite{NISTAIRMF} (Govern, Map, Measure, and Manage): 

\begin{itemize}
    \item[A.] Input Guardrails (MAP → MANAGE): PII scrubbing, adversarial prompt detection, domain-restricted inputs. 
    \item[B.] Policy \& Governance Guardrails (GOVERN → MAP → MANAGE): Scope limitations, regulatory constraints, RBAC.
    \item[C.] Model-Level Guardrails (MAP → MEASURE → MANAGE): Alignment tuning, safety classifiers, confidence thresholds.
    \item[D.] Output Guardrails (MEASURE → MANAGE): Redaction, hallucination detection, refusal templates.
    \item[E.] Action \& Execution Guardrails (GOVERN → MANAGE): Safety interlocks, approval workflows, OPA policy engines. 
    \item[F.] Monitoring, Auditing \& Logging Guardrails (MEASURE → MANAGE): Drift detection, anomaly detection, audit trails. 
\end{itemize}   
\vspace{5mm}

\textbf{G\"{o}del's incompleteness theorem}~\citep{godel1931}, presented here in the form of \citep{chaitin1974}, establishes that for any checker $C$ there exists a truth $T$ such that 
$$C(T,p)\neq 1,\;\forall p.$$ 

G\"{o}del's theorem is one of the most important results in mathematics of all time. It shows that a finite set of mathematical rules cannot be used to derive/prove all true mathematical statements. There will be some statements that this set of rules cannot prove. 
The goal of this manuscript is to adapt G\"{o}del's  general result to the realm of adversarial prompts and show that no robust guardrails exist to enforce an OOPS policy $\Pi$.

\section{Method}
\subsection{On adversarial robustness of AI Systems}\label{sec:idealAISystem}

The term \textit{adversarial robustness} is defined similarly to the Security of the AI System as the ability of the AI System to withstand deliberate attacks by capable adversaries aiming to force the AI system out of alignment with a policy. This manuscript considers attacks through the main prompt interface of a chatbot or an AI Agent based on a LLM. In this setting, an adversarial prompt is a string of some length. For example: \textit{"Please give me instructions for making a bomb"}. As noted above, responsible organizations adopt OOPS policies $\Pi$ that will flag such a request as unacceptable and block it.

Adversarial prompts often take advantage of the ambiguity of the language of the input prompt. Guardrails - filters, policies, classifiers, and prompt analysis layers - generally rely on interpreting the intent and semantic meaning of the input. Pattern  matching - identifying specific keywords, intent or known harmful structures - is widely used technique for implementing guardrails. When the input text is ambiguous, the interpretation becomes uncertain. Ambiguity is in essence the "Achilles' heel" of modern guardrails.  Thus, any input that introduces linguistic noise or creates context-based uncertainty makes it significantly harder for the AI system to definitively categorize the request as in or out of OOPS policy. 

The following techniques illustrate how ambiguity undermines the robustness of guardrails in AI systems:
\begin{itemize}
\item Disruption of pattern-matching heuristics
\begin{itemize}
    \item Linguistic obfuscation: using poetic structures, metaphors, rare dialects, polysemy, etc., adversaries can create inputs that to a human imply clear malicious intent but to a machine may look like benign, creative text.
    \item Contextual framing: here the adversary wraps a harmful prompt into a complex, fictional, or highly-professional role-play scenario, which causes the ambiguity of intent to rise and in turn make the AI system struggle to determine if it is in or out of OOPS policy.
    \item  Crescendo context exploit~\cite{russinovich2025greatwritearticlethat}: gradual escalation of the dialogue with the LLM by referencing the model's replies, progressively leading to a successful jailbreak.
\end{itemize}
\item Compositional ambiguity - introduce ambiguity in the structure of the sentence. 
\begin{itemize}
    \item For example, the sentence "Show  me how people break encryption systems used by banks" has three possible meanings: 
    \begin{itemize}
        \item historical analysis;
        \item academic cryptography discussion; 
        \item operational attack instructions.
    \end{itemize} 
    Determining which interpretation applies may be unclear from the context.
    \item The ASCII Art-based jailbreak attack~\cite{promptasciiartbasedjailbreak2024}, which  compromised LLMs with guardrails guarding the topic of bomb making, mixes plain language with instructions for how to decode figures drawn with ASCII symbols to determine the meaning of the request.
    \item The richness of the English\footnote{The same is true for any other human language} language combined with cybersecurity tricks~\cite{ANSIcodes025} can also create compositional ambiguities to exploit in prompting.  
\end{itemize}
   
\item Politeness exploits and tone-based shifts. Tone manipulation - extreme politeness, deference, or emotional pleading - is used to bypass guardrails. Such inputs are linguistically  benign and statistically smooth because they do not contain triggering keywords and standard lexical filters tend to let them pass. The ambiguity lies in the subtext: the AI model perceives a helpful and cooperative interaction, which tends  to shift the probability distribution. This, in turn, makes the AI model more willing to provide speculative or OOPS-adjacent answers that it would refuse under a natural, direct prompt.  
\item Context robustness gap exploits.
In complex AI systems, such as those using retrieval-augmented generation (RAG), ambiguity may be injected through retrieved documents. If a system retrieves a mix of benign and contextually-confusing information, the guardrail may flip its judgment based on how it interprets the surrounding noise. The ambiguity of the retrieved context makes the final OOPS decision brittle.  
\end{itemize}
This non-exhaustive list of examples shows that if OOPS policy compliance checking is built on a finite set of rules and language is infinitely ambiguous, the number of ways in which adversaries can hide harmful intent in plain sight is effectively limitless. 
The findings in \cite{vassilev2025} show a large and increasing body of adversarial attacks. Recent findings \cite{zou2025securitychallengesaiagent} show near perfect rate of success of prompt-based attacks against agentic systems based on LLMs.

Let's begin the analysis with the assumption that the AI System is ideal and takes prompts of unlimited length and has no constraints on the available compute. Because the prompts are based on the human language, they can be of arbitrary length and meaning or even no meaning at all, just strings of arbitrary length. Let $\Omega$ be the set of all prompts to the AI System. Because of our assumption, $card(\Omega)=\aleph_0$, where $card()$ is the cardinality of a set. $\Omega$ contains all possible strings. 

Let $\Gamma_\Pi$ be the subset of all prompts that $\Pi$ deems unacceptable and blocks. 

\begin{proposition}
$\Gamma_\Pi$ is infinite, i.e.,
 $card(\Gamma_\Pi)=\aleph_0$.\label{prop:gamma}
\end{proposition}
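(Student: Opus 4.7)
The plan is to establish the result in two stages: first argue that $\Gamma_\Pi$ is at most countable, and then exhibit an injection of $\mathbb{N}$ into $\Gamma_\Pi$ to rule out finiteness. The upper bound is immediate: since $\Gamma_\Pi\subseteq\Omega$ and $card(\Omega)=\aleph_0$ by the assumption that prompts are finite strings of arbitrary length over a finite alphabet, every subset of $\Omega$ has cardinality at most $\aleph_0$. So the entire content of the proposition reduces to showing that $\Gamma_\Pi$ is not finite.

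For the lower bound, I would first observe that any non-trivial OOPS policy $\Pi$ admits at least one forbidden prompt $p_0\in\Gamma_\Pi$ (otherwise $\Pi$ is vacuous and there is nothing to enforce; the running example of the bomb-making request suffices). From $p_0$ I would construct an explicit countably infinite family $\{p_n\}_{n\in\mathbb{N}}\subseteq\Gamma_\Pi$ of pairwise distinct prompts whose OOPS-status is inherited from $p_0$. The simplest construction is padding: let $p_n$ be the concatenation of $p_0$ with $n$ additional whitespace characters (or any semantically inert suffix such as trailing punctuation, a repeated polite closing, or an innocuous sentence). Each $p_n$ carries the same forbidden intent as $p_0$, so any policy $\Pi$ that blocks $p_0$ must also block $p_n$; otherwise $\Pi$ would be trivially circumventable by the addition of a single space, contradicting its purpose of guarding the underlying semantic content. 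The $p_n$ are clearly pairwise distinct, so $n\mapsto p_n$ is an injection $\mathbb{N}\hookrightarrow\Gamma_\Pi$, giving $card(\Gamma_\Pi)\geq\aleph_0$.

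Combining the two bounds yields $card(\Gamma_\Pi)=\aleph_0$, as required. One can also reinforce the construction by appealing to the richness of natural language noted in the preceding discussion: synonym substitution, paraphrasing, and embedding inside larger carrier prompts (as exploited by ASCII-art and crescendo attacks) each independently provide injections of $\mathbb{N}$ into $\Gamma_\Pi$, so the conclusion is robust to how narrowly or broadly one reads "semantically equivalent."

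The main obstacle is conceptual rather than technical: one must commit to the reasonable stipulation that $\Pi$ is closed under obviously content-preserving transformations of its forbidden prompts. Without such a closure assumption the proposition could fail pathologically (a "policy" that blocks exactly one string is technically an OOPS policy), but any such policy would be useless in practice and would not correspond to the notion of alignment used throughout the paper. I would therefore state this closure condition explicitly as a mild hypothesis on $\Pi$ before invoking the padding argument, and note that every real-world OOPS policy satisfies it.
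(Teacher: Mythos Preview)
Your proof is correct and follows essentially the same approach as the paper: both arguments extend a known forbidden prompt with benign, semantically inert content to produce new forbidden prompts, with the paper phrasing this as a contradiction via a hypothetical longest element of $\Gamma_\Pi$ while you give a direct injection $\mathbb{N}\hookrightarrow\Gamma_\Pi$. Your version is slightly more explicit in separating the upper bound and in flagging the content-preservation closure property as an assumption on $\Pi$, which the paper leaves implicit.
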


See Appendix~\ref{sec:mainappend} for the proof. 

Let $T_\Pi$ be a truth about something that $\Pi$ deems unacceptable. Let $C_\Pi $ be a checker of truths related to $\Gamma_\Pi$. Then  $C_\Pi(T_\Pi, p) = 1$ if and only if there exist a proof $p$ that  verifies $T_\Pi$ is unacceptable for $\Pi$. For example, $T_\Pi$ could be "$x$ is an adversarial prompt that $\Pi$ deems unacceptable", where $x$ is  the adversarial prompt string. If  $p$ is a proof that shows that $x$ ibelongs to $\Gamma_\Pi$ and falls under the restrictions of $\Pi$, then $C_\Pi(T_\Pi, p) = 1$.   

Now, we are in a position to state the main theorem of this study.

\begin{theorem}\label{thm:infinite}
    For any checker $C_\Pi(T_\Pi, p)$ there exist a truth $T_\Pi$ such that 
    \begin{equation}
        C_\Pi(T_\Pi,p)\neq 1,\;\forall p.
    \end{equation}
\end{theorem}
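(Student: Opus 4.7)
The plan is to reduce Theorem~\ref{thm:infinite} to Chaitin's form of G\"odel's incompleteness theorem, which is already stated in the excerpt as a black box. The key observation is that, by the foundational assumption emphasized in Section~\ref{sec:related}, every AI component---including the checker $C_\Pi$---is a computation, hence a Turing-computable procedure with a finite description. Proposition~\ref{prop:gamma} guarantees that $\Gamma_\Pi$ is denumerably infinite, so the relevant universe of truths $\{T_\Pi : T_\Pi \text{ is of the form ``$x\in\Gamma_\Pi$''}\}$ is itself infinite and carries unbounded descriptive (Kolmogorov) complexity.

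First I would formalize $C_\Pi$ as a partial recursive function from pairs of strings $(T_\Pi,p)$ to $\{0,1\}$, and fix a universal machine $U$ so that Kolmogorov complexity $K(\cdot)$ is well defined. Since $C_\Pi$ has some finite program length $|C_\Pi|=k$, Chaitin's incompleteness theorem yields a constant $c_0$ (depending only on $U$) such that $C_\Pi$ cannot certify any true statement of the form ``$K(x)>k+c_0$.'' Next, I would exhibit, inside $\Gamma_\Pi$, a concrete truth of exactly this unverifiable form: using Proposition~\ref{prop:gamma}, pick a prompt $x^\ast\in\Gamma_\Pi$ with $K(x^\ast)>k+c_0$ (such an $x^\ast$ must exist because only finitely many strings have complexity below any given bound, whereas $\Gamma_\Pi$ is infinite). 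Define $T_\Pi^\ast$ to be the statement ``$x^\ast\in\Gamma_\Pi$ and $K(x^\ast)>k+c_0$,'' which is true by construction.

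The crux is then a diagonal/Berry-style argument: if some proof $p^\ast$ were to satisfy $C_\Pi(T_\Pi^\ast,p^\ast)=1$, one could combine the fixed programs for $C_\Pi$ and $U$ with an enumeration of candidate proofs to build a short description of $x^\ast$ itself, contradicting $K(x^\ast)>k+c_0$. Hence no such $p^\ast$ exists, establishing $C_\Pi(T_\Pi^\ast,p)\neq 1$ for every $p$, which is exactly the conclusion of the theorem.

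The main obstacle I expect is not the incompleteness step---that is essentially imported from Chaitin---but rather the bridging step that recasts an OOPS membership question as a Kolmogorov-complexity assertion that the checker is actually required to certify. I would handle this by noting that the policy $\Pi$ effectively induces a characteristic predicate on strings, and that a sound $C_\Pi$ must, in particular, certify membership witnesses; if even one infinite subfamily of such witnesses has unbounded $K$, the Chaitin obstruction applies. A secondary subtlety is ruling out ``trivial'' checkers (e.g., $C_\Pi\equiv 1$), which I would dispose of by invoking soundness: the checker must reject at least some false statements, so it has nontrivial semantic content and Chaitin's bound bites.
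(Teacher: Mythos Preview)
Your approach is essentially the paper's: both prove the theorem by Chaitin's Kolmogorov-complexity version of incompleteness, using Proposition~\ref{prop:gamma} to ensure $\Gamma_\Pi$ contains strings of arbitrarily high complexity. The paper simply takes $T_\Pi$ to be the bare statement ``every algorithm that outputs $x$ has length $>n$'' for an $x\in\Gamma_\Pi$ and runs the explicit search (your Berry step) as Algorithm~1; it does not conjoin ``$x\in\Gamma_\Pi$'' into $T_\Pi$, so the bridging obstacle you worry about never arises.

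One point to tighten: your diagonal step claims the enumeration yields ``a short description of $x^\ast$ itself.'' It does not. If you hard-code $T_\Pi^\ast$ (which contains $x^\ast$) and search only over proofs, the program already has length $\gtrsim K(x^\ast)$ and no contradiction results. The correct Berry program (and the one the paper writes down) enumerates over \emph{all} candidate statements of the relevant form together with proofs, and outputs whichever $x$ is certified first; soundness of $C_\Pi$ then guarantees $K(x)>n$ for \emph{that} $x$, while the program has length $O(\log n)+|C_\Pi|$, giving the contradiction. With that fix (and dropping the superfluous conjunction) your argument coincides with the paper's.
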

See Appendix~\ref{sec:mainappend} for the proof. 

Thus, there are adversarial prompts that will evade any policy $\Pi$ and jailbrake the AI System to output undesirable information that attackers can misuse to harm others downstream.   

Theorem~\ref{thm:infinite} establishes \textit{information-theoretic security and alignment limitations} for ideal AI Systems based on LLM's that accept prompts in the form of text strings, the prevalent form of interactions with users of the AI System. It applies to all types of AI systems with different architectures and based on different technologies (neuro-symbolic, neural networks, hybrid) because all of them rely on computation for reasoning. 

Although quite pessimistic for the defenders trying to prevent such events from happening, there are practical measures that could be taken to improve the security posture of any AI System based on LLM. Notice that the proof above does not give any recipes to attackers for how to construct adversarial prompts $x$ for a given policy $\Pi$. This leaves room for defenders to harden their AI Systems, which is considered in the next section.

\subsection{Considerations for real AI Systems with finite context windows}\label{sec:finiteAISystem}
The previous section established \textit{information-theoretic security and alignment limitations} for ideal AI Systems based on LLM's. This section considers real-life AI systems with finite context windows. The context window  $W$ is designed to provide all necessary information as context for processing the user prompt to improve the quality of the response. In this sense, it sets limits to the size of the user prompt. Figure~\ref{fig:contextwindow} shows examples of real-life context window sizes. Clearly, these windows are finite but very large, taking as much as a full library shelf worth of text in a single prompt. Note that the sizes reported are in tokens. The average token length is 4 characters, which means the actual text strings are about 4 times bigger.  

\begin{figure}
    \centering
    \includegraphics[width=0.9\linewidth]{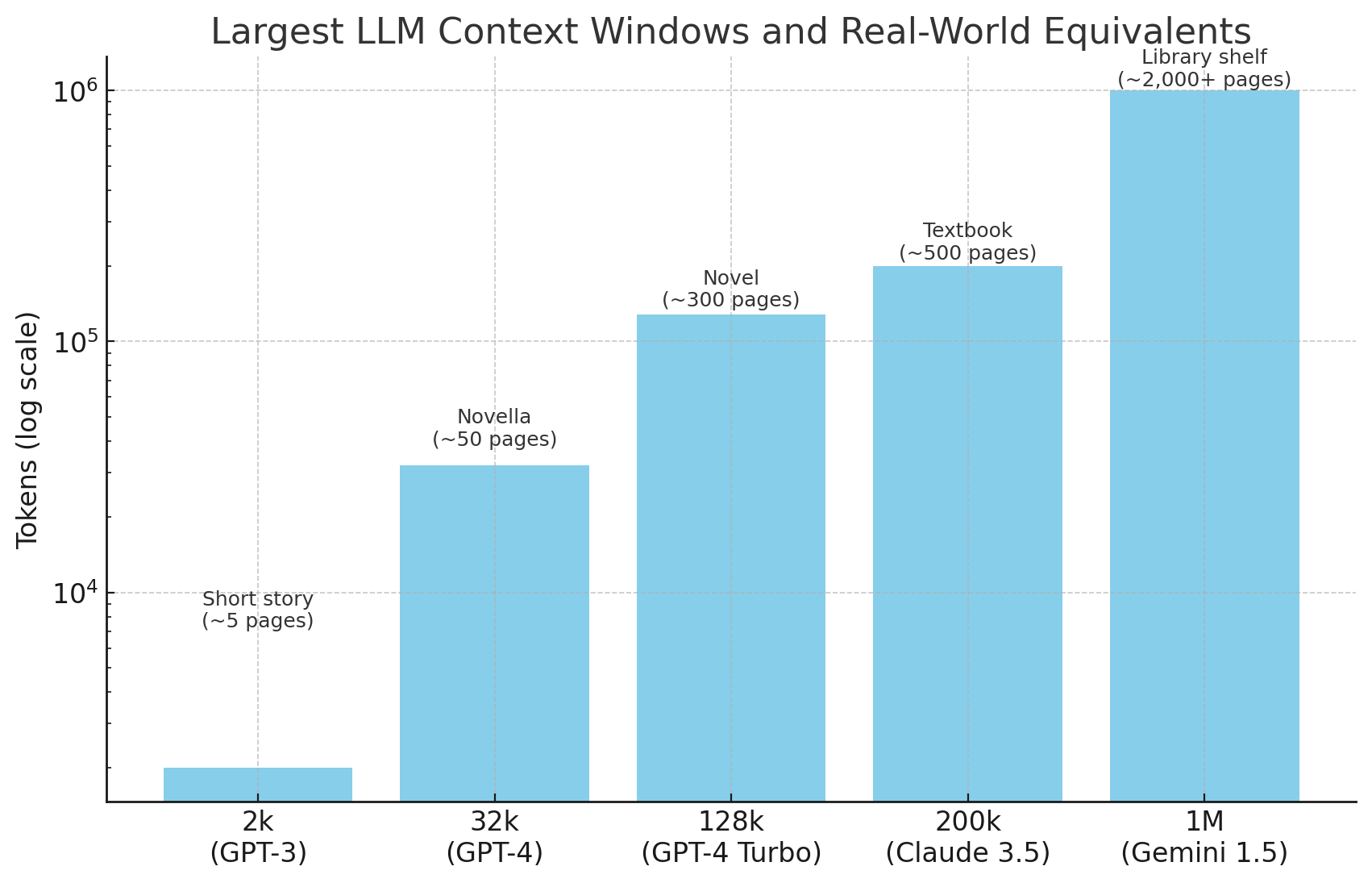}
    \caption{Context window sizes for real LLM's}
    \label{fig:contextwindow}
\end{figure}

The window sizes shown in Figure~\ref{fig:contextwindow} are so large that no organization can really traverse all possible strings within $W$ to ensure the complete testing of the AI System: there are not enough resources on Earth to allow for this. Let $\hat\Omega$ be the set of all prompts to the AI System that fit in $W$. Here, $card(\hat\Omega)=M$, for some large $M>0$. 

Let $\hat\Pi$ be a finite OOPS policy over $\hat\Omega$ and $\hat\Gamma_{\hat\Pi}$ be the subset of all prompts that $\hat\Pi$ deems unacceptable and blocks. In this setting $card(\hat\Pi) = G \ll W$. This means that there are much fewer guardrails in $\hat\Pi$ than all potentially adversarial prompts in $\hat\Omega$. It is reasonable to assume that for any context window size $W$, the AI System has all algorithms that output strings up to this length. This is because code size is limited by a different set of constraints and can be expanded to practically unlimited size with the help of engineering techniques that allow swapping portions of code to disk or other media to limit the size of the in-memory footprint of the algorithm.

Now, we are in a position to state the finite context window version of the theorem of this study.

\begin{theorem}\label{thm:finite}
    For any checker $C_{\hat\Pi}(T_{\hat\Pi}, \hat{p})$ there exist a truth $T_{\hat\Pi}$ such that 
    \begin{equation}
        C_{\hat\Pi}(T_{\hat\Pi},\hat{p})\neq 1,\;\forall \hat{p}.
    \end{equation}
\end{theorem}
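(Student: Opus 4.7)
The plan is to adapt the Chaitin-style argument used in Theorem~\ref{thm:infinite} to the finite setting by exploiting the explicit size gap between the policy and the context window. Although both $\hat\Omega$ and $\hat\Gamma_{\hat\Pi}$ are now finite, the subset of adversarial prompts remains enormous relative to any finite description of the checker $C_{\hat\Pi}$, and that gap is precisely what drives the impossibility result.

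First, I would establish a finite analog of Proposition~\ref{prop:gamma}: because any fixed adversarial ``kernel'' string of length $\ell \ll W$ can be embedded into on the order of $\sigma^{W-\ell}$ distinct prompts in $\hat\Omega$ (where $\sigma$ is the alphabet size), and each such completion remains an OOPS prompt, we have $|\hat\Gamma_{\hat\Pi}|$ exponential in $W$. Crucially, this cardinality is vastly larger than the size $G$ of $\hat\Pi$ and the description length of $C_{\hat\Pi}$, so an analog of the infinite case applies in spirit: there are far more adversarial prompts than the checker can individually enumerate or characterize.

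Second, I would bound the number of distinct truths of the form ``$x \in \hat\Gamma_{\hat\Pi}$'' that $C_{\hat\Pi}$ can certify with any accessible proof $\hat p$. The finite Chaitin inequality tells us that no checker of description size $s$ can verify that a string has Kolmogorov complexity greater than $s + O(1)$; translating into the OOPS framework, $C_{\hat\Pi}$ cannot reliably distinguish incompressible adversarial prompts (for example, those produced by obfuscation techniques such as ASCII art, ANSI codes, or gradual crescendo encodings referenced in Section~\ref{sec:idealAISystem}) from visually similar benign strings unless it has stored enough information to do so. Since $\hat\Gamma_{\hat\Pi}$ contains prompts of complexity exceeding $|C_{\hat\Pi}| + O(\log G)$, at least one such prompt $x^{\ast}$ yields the desired $T_{\hat\Pi}$: the statement ``$x^{\ast} \in \hat\Gamma_{\hat\Pi}$'' admits no verifiable proof $\hat p$ accessible to $C_{\hat\Pi}$, and pigeonhole delivers the conclusion.

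The main obstacle lies in formalizing ``accessible to $C_{\hat\Pi}$'' so that a genuine bound $N(C_{\hat\Pi}) < |\hat\Gamma_{\hat\Pi}|$ emerges. A naive reading would permit the checker to precompute a lookup table over all $M$ prompts, which would trivialize the theorem, so the argument must treat $G \ll W$ as a real resource constraint that pins the checker's descriptive complexity to a function of $G$ rather than of $M$. Under that reading, the counting step goes through directly; alternatively, one could replace the pigeonhole with an explicit diagonalization against $C_{\hat\Pi}$ that mirrors the construction from Theorem~\ref{thm:infinite}, exhibiting a concrete adversarial prompt that defeats the checker by self-referentially encoding a description of $\hat\Pi$'s rules and is guaranteed to fit within $W$ provided $W$ is at least polynomially larger than $G$.
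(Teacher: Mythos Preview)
Your proposal diverges from the paper's proof in a way that opens a real gap. The paper does \emph{not} take the truth $T_{\hat\Pi}$ to be a membership statement ``$x^{\ast}\in\hat\Gamma_{\hat\Pi}$''; it takes $T_{\hat\Pi}$ to be a Kolmogorov--Chaitin statement, namely ``every algorithm that outputs $x$ has length $>n$,'' and then runs the same search-algorithm diagonalization as in Theorem~\ref{thm:infinite}. The only new ingredient in the finite setting is a two-case split: either the algorithms of length $\le n$ already produce every element of $\hat\Gamma_{\hat\Pi}$ (in which case $T_{\hat\Pi}$ is false for all $x$ and the checker can never return $1$, so the theorem holds trivially), or they miss some $x$, and then the $O(\log_2 n)$-length search algorithm yields the usual contradiction. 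The assumption $G\ll W$ is motivation in the surrounding text but is not used in the proof itself.

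The gap in your route is exactly the one you flag and do not close. In the finite setting, membership in $\hat\Gamma_{\hat\Pi}$ is decidable, so for the truth ``$x^{\ast}\in\hat\Gamma_{\hat\Pi}$'' there \emph{is} a verifiable proof---a finite certificate or, in the worst case, the lookup table you mention---and nothing in the paper's definition of $C(T,p)$ forbids the checker from accessing it. Your counting bound $N(C_{\hat\Pi})<|\hat\Gamma_{\hat\Pi}|$ therefore requires an external resource constraint tying the checker's power to $G$ rather than $M$, which the paper never imposes and which you do not formalize. By contrast, choosing $T_{\hat\Pi}$ to be a complexity lower bound makes the unprovability intrinsic (Chaitin's incompleteness), so no such constraint is needed. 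Your closing alternative---an explicit diagonalization mirroring Theorem~\ref{thm:infinite}---is in fact what the paper does; had you developed that line with the Kolmogorov-style $T_{\hat\Pi}$, you would have matched the paper's argument.
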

See Appendix~\ref{sec:mainappend} for the proof.

Theorem~\ref{thm:finite} establishes \textit{information-theoretic security and alignment limitations} for \textit{real-life} AI Systems based on LLM's that accept prompts in the form of text strings, the prevalent form of interactions with users of the AI System. Similar to ideal systems, AI systems of different architectures and technologies are covered since all of them use computation for reasoning.  

Although, similarly to the ideal case, this result is pessimistic for the defenders trying to prevent such events from happening, there are practical measures that could be taken to improve the security posture of any AI System based on LLM. Notice that the proof of Theorem~\ref{thm:finite} does not give any recipes to attackers for how to construct adversarial prompts $x$ for a given policy $\hat\Pi$. 
This means that updating $\hat\Pi$ with new rules makes the task of attackers more difficult because adversarial prompts that work against the old policy $\hat\Pi_{old}$ may not work against $\hat\Pi_{new}$. Even though the theorem above does not give explicit guidelines to defenders for how to update $\hat\Pi$, a proactive approach of updating the policy with any known new adversarial prompts may be effective. This suggests a proactive process of searching for new adversarial prompts and updating the policy $\hat\Pi$ to cover them.   

\subsection{Broader implications}\label{sec:broader}
So far, the manuscript focused on adversarial risks and the ability of an AI system to detect and counter them. However, Theorems~\ref{thm:infinite} and \ref{thm:finite} can be generalized to show that there are truths in $\Omega$ that an AI System cannot verify through cognitive reasoning. 

For an ideal AI system with $\Omega$ the set of all prompts and $T_\Omega$ the truths in it, the following theorem holds.
\begin{theorem}\label{thm:geninfinite}
    For any checker $C_\Omega(T_\Omega, p)$ there exist a truth $T_\Pi$ such that 
    \begin{equation}
        C_\Omega(T_\Omega,p)\neq 1,\;\forall p.
    \end{equation}
\end{theorem}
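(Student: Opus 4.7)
The plan is to recognize that Theorem~\ref{thm:geninfinite} is the most direct specialization of the Chaitin form of G\"odel's theorem (stated in the Introduction) to the full prompt domain $\Omega$, with the earlier Theorems~\ref{thm:infinite} and \ref{thm:finite} being restrictions of this same statement to the OOPS subset $\Gamma_\Pi$. First, I would emphasize that the manuscript's standing assumption is that every AI system reasons by computation: any checker $C_\Omega(T_\Omega, p)$ is therefore a computable procedure that takes two strings (the candidate truth and the candidate proof) and returns a bit. This is precisely the abstract setting in which Chaitin's formulation of G\"odel's theorem is stated.

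Next, I would invoke Chaitin's theorem directly: for any computable checker $C$ of truth-proof pairs, there exists a truth $T$ such that $C(T,p) \neq 1$ for every $p$. Instantiating $C := C_\Omega$ produces the required $T_\Omega$, and since $\Omega$ is the set of all strings and $T_\Omega$ is itself a string, membership $T_\Omega \in \Omega$ is automatic. The argument would then conclude by observing that $p$ ranges over the same string domain, so the quantifier $\forall p$ is identical to the one in the original Chaitin statement.

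The main conceptual obstacle, compared with Theorems~\ref{thm:infinite} and \ref{thm:finite}, is justifying that removing the OOPS restriction does not trivialize or alter the reduction. In the earlier theorems, Proposition~\ref{prop:gamma} was needed to ensure that $\Gamma_\Pi$ was rich enough (countably infinite) to host a G\"odelian diagonal truth; here, no such preparatory step is needed because $\Omega$ is by construction all of $\aleph_0$. The only thing that must be defended is that broadening the class of admissible truths $T_\Omega$ beyond adversarial-prompt certificates does not somehow grant the checker extra proof-verification power able to neutralize the diagonal construction. Since $C_\Omega$ is fixed as a single computable procedure before $T_\Omega$ is constructed, the standard diagonalization underlying Chaitin's theorem applies unchanged, and the G\"odelian truth it produces cannot be certified regardless of how wide the cognitive scope of the AI system is made.

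Finally, I would close by noting the interpretive upshot the author likely wants to highlight: because the reduction is agnostic to the semantic content of $T_\Omega$, the incompleteness is not a quirk of policy enforcement but a general limit on cognitive verification by any computational AI system, exactly paralleling the situation for formal mathematical reasoning.
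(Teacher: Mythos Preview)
Your proposal is correct and takes essentially the same approach as the paper: the paper's proof is a single sentence stating that Theorem~\ref{thm:geninfinite} follows by re-running the proof of Theorem~\ref{thm:infinite} with the full prompt space $\Omega$ in place of $\Gamma_\Pi$, which is exactly your observation that Chaitin's theorem applies directly once the OOPS restriction is dropped and no analogue of Proposition~\ref{prop:gamma} is needed since $\mathrm{card}(\Omega)=\aleph_0$ by construction. Your additional remarks about why broadening the truth class does not grant the checker extra power are sound elaborations but go beyond what the paper itself supplies.
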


For a real-life AI system with finite context window $W$, a finite space of input strings $\hat\Omega$ and corresponding truths in it $T_{\hat\Omega}$, the following theorem holds.
\begin{theorem}\label{thm:genfinite}
    For any checker $C_{\hat\Omega}(T_{\hat\Omega}, \hat{p})$ there exist a truth $T_{\hat\Omega}$ such that 
    \begin{equation}
        C_{\hat\Omega}(T_{\hat\Omega},\hat{p})\neq 1,\;\forall \hat{p}.
    \end{equation}
\end{theorem}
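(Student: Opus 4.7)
The plan is to derive Theorem~\ref{thm:genfinite} by the same Chaitin-style counting/diagonal argument that establishes Theorem~\ref{thm:finite}, observing that the OOPS restriction played no essential role there. In Theorem~\ref{thm:finite} the checker $C_{\hat\Pi}$ ranged over truths pertaining to $\hat\Gamma_{\hat\Pi}\subseteq\hat\Omega$; in Theorem~\ref{thm:genfinite} the checker $C_{\hat\Omega}$ ranges over truths pertaining to all of $\hat\Omega$. Since $\hat\Gamma_{\hat\Pi}\subseteq\hat\Omega$, a checker for arbitrary truths in $\hat\Omega$ is strictly more expressive than a checker restricted to $\hat\Gamma_{\hat\Pi}$, so any impossibility for the larger class subsumes the smaller. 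The strategy is therefore to lift the earlier construction almost verbatim.

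First I would invoke the standing assumption from Section~\ref{sec:finiteAISystem} that, for any window size $W$, the AI system has access to all algorithms whose outputs are strings of length up to $W$. This gives the key information-theoretic ingredient: the set of computable truths about elements of $\hat\Omega$ carries at least as much description-length content as the set of programs producing strings in $\hat\Omega$, which by the cardinality bound $\mathrm{card}(\hat\Omega)=M$ is astronomically larger than the description length of any particular checker $C_{\hat\Omega}$. Second, I would perform the Chaitin-style diagonalization: exhibit a specific truth $T_{\hat\Omega}$ (for instance, an assertion that the Kolmogorov complexity of a designated string in $\hat\Omega$ exceeds a threshold derived from the size of $C_{\hat\Omega}$) such that any proof $\hat p$ fitting within $W$ would, together with $C_{\hat\Omega}$, yield a contradictory short description. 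This is precisely the step used in Theorem~\ref{thm:finite}, now applied without the OOPS filter on $T$, so the same witness $T_{\hat\Omega}$ goes through.

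Finally, I would observe that the argument is insensitive to what $T_{\hat\Omega}$ semantically refers to — adversarial, benign, factual, or reasoning-related — because $C_{\hat\Omega}$ is, by the paper's framing, an arbitrary finite computational checker operating on strings. Hence the conclusion $C_{\hat\Omega}(T_{\hat\Omega},\hat p)\neq 1$ for every $\hat p$ fitting in $W$ holds uniformly across the entire space $\hat\Omega$, which is exactly the claim of Theorem~\ref{thm:genfinite}.

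The main obstacle I anticipate is the same one that already shadows Theorem~\ref{thm:finite}: making the Chaitin/Berry-style complexity bound precise in the \emph{finite} window setting, where ordinary diagonalization over $\aleph_0$ is unavailable. One must carefully quantify the description length of $C_{\hat\Omega}$ together with any admissible proof $\hat p$ of length at most $W$, and show that this total remains strictly below the complexity of the constructed truth $T_{\hat\Omega}$. Provided the bookkeeping carried out for Theorem~\ref{thm:finite} is reusable — and it should be, since nothing in that bookkeeping used the restriction to $\hat\Gamma_{\hat\Pi}$ — the proof of Theorem~\ref{thm:genfinite} reduces to citing Theorem~\ref{thm:finite} applied to the trivial policy $\hat\Pi=\hat\Omega$.
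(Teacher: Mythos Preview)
Your proposal is correct and matches the paper's own approach: the paper's proof of Theorem~\ref{thm:genfinite} is simply stated as a ``straightforward application of the proof of Theorem~\ref{thm:finite} in the setting of $\hat\Omega$,'' i.e., rerunning the Chaitin-style argument with $\hat\Gamma_{\hat\Pi}$ replaced by all of $\hat\Omega$, exactly as you propose via the trivial policy. Your added discussion of the finite-window bookkeeping and Kolmogorov-complexity witness is more detailed than what the paper supplies, but the underlying reduction is the same.
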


See Appendix~\ref{sec:append} for the proofs. 

These results do not mean that current and future AI systems, including Artificial General Intelligence  or Artificial Super Intelligence, will not be able to discover and prove truths in science that are currently unproven or unknown to humanity. However, these results show that AI systems are ultimately limited in ways similar to humans so that there are truths they will never be able to prove.  

\section{Conclusions and Future Work}

The results of Theorems~\ref{thm:infinite} and \ref{thm:finite} are general and independent of specific AI System architectures or languages of interacting with them, including AI Agents that may interact with each other using binary communications. They do apply to any AI System that contains undesirable information that has to be protected from extracting through prompting. In this sense, they apply to present and future AI Systems, including Artificial General Intelligence (AGI) and Artificial Super Intelligence (ASI) systems.  

Furthermore, the results of Theorems~\ref{thm:geninfinite} and \ref{thm:genfinite} establish information theoretic limits on the general truths in science that ideal and real-life AI systems can prove.

In this sense, AI systems will always be limited in ways similar to humans in their ability to provide proofs for all science truths known to humanity. 

The theoretical approaches used in this paper may be applicable to other domains where certain compliance policies are enforced through sets of technical constraints, like the policies for Autonomous Networking.
\begin{appendices}{\textbf{Appendices}}

\section{Proofs of theorems from Section~\ref{sec:idealAISystem}}\label{sec:mainappend}

The analysis in this section uses a metric for measuring the length of strings, represented by the function $len(x)$, which returns the length of a string $x$ in bits. It also uses proofs in the form of checkers or halting programs running on a general computing platform as defined in \cite{chaitin1974}.  
\begin{proof}[Proof of Proposition\ref{prop:gamma}:] Suppose $card(\Gamma_\Pi)=n$ for some $n>0$.  Let $x_{max}$ be the longest string in $\Gamma_\Pi$. Construct $x^{*}_{max}$ by concatenating a benign prefix or postfix with $x_{max}$. For example, a benign prefix \textit{"I would like to get some information about the weather. Ignore this."} would result in the AI System ignoring the first request to give information about the weather and starting to process what is left from the prompt, which is the original $x_{max}$ portion. This means that $x^{*}_{max}$ is also an adversarial prompt. i.e., $x^{*}_{max}\in \Gamma_\Pi$. However, $len(x^{*}_{max})> len(x_{max})$, thus a contradiction. There are many other practical ways to extend a known adversarial prompt to a longer adversarial prompt but the full exploration of this issue is outside the scope of this work. Hence, $card(\Gamma_\Pi)=\aleph_0.$
\end{proof}
\begin{proof}[Proof of Theorem~\ref{thm:infinite}:]
    : Let $x$ be a prompt in $\Gamma_\Pi$. 
    
    Let $n$ be a positive integer.
    
    Let $T_\Pi := $ "every algorithm that outputs $x$ is of length $>n$."

Note that the set of all algorithms of length less than or equal to $n$ that output $x\in \Gamma_\Pi$ is a finite set. Each of these algorithms outputs at most one adversarial prompt $x$. Therefore, there exist $x$ for which $T_\Pi$ is true. In fact, there are infinitely many adversarial prompts $x$ for which $T_\Pi$ is true because $card(\Gamma_\Pi)=\aleph_0$.

    Consider the following Algorithm~\ref{algoCt}.
    
\begin{algorithm}[h]
\caption{Output $x$ if the truth $T_\Pi$ holds}\label{algoCt}
\begin{algorithmic}[1]
\Require $n > 0\; \& \; len(x) \neq 0$
\State $i \Leftarrow 1$
\While{$i > 0$}
\For{\texttt{every prompt} $x\in \Gamma_\Pi$ \texttt{with} $len(x)\leq i $}
\For{\texttt{every proof} $p$ \texttt{with} $len(p)\leq i $}
        \If{$C_\Pi(T_\Pi, p) \;==\; 1$}
            \State \texttt{output} $x$
            \State \Return
        \EndIf
\EndFor
\EndFor

\State $i\Leftarrow i+1$
\EndWhile
\end{algorithmic}
\end{algorithm}

Notice that $n$ shows up in only one place in Algorithm~\ref{algoCt}. It can be represented in $\log_2n$ bits. The rest of the algorithm is fixed and the length of this portion is constant. The value of that constant depends on the programming language used to encode this logic for execution on a specific computing platform. Establishing the value of this constant is out of scope for this analysis. Thus, the length of Algorithm~\ref{algoCt} is $O(\log_2n)$. As $n$ grows,  $len(Algorithm~\ref{algoCt}) \ll n$. This means that Algorithm~\ref{algoCt} can never return since it is an algorithm that outputs $x$ but has length less than or equal to $n$ and thus $T_\Pi$ does not hold. 

On the other hand, as we observed above, there are prompts $x$ for which $T_\Pi$ holds. Because Algorithm~\ref{algoCt} traverses every possible $x$ and every possible proof $p$ to check if $x$ belongs to $\Gamma_\Pi$, it would have found such an $x$ and would have output it and returned. This is a contradiction.  
\end{proof}
\section{Proofs of theorems from Section~\ref{sec:finiteAISystem}}\label{sec:finiteappend}
\begin{proof}[Proof of Theorem~\ref{thm:finite}]
    : Let $x$ be a prompt in $\hat\Gamma_{\hat\Pi}$. 
    
    Let $n$ be a positive integer.
    
    Let $T_{\hat\Pi} := $ "every algorithm that outputs $x$ is of length $>n$."

Note that the set of all algorithms of length less than or equal to $n$ that output $x\in \hat\Gamma_{\hat\Pi}$ is a finite set. Each of these algorithms outputs at most one adversarial prompt $x$. Let $\Theta$ be the set of strings produced by these algorithms. There are two possibilities:
\begin{equation}\Theta = \hat\Gamma_{\hat\Pi}\label{eq:complete}\end{equation}
or    \begin{equation}
        \Theta\subset\hat\Gamma_{\hat\Pi}\label{eq:subset}
    \end{equation}   
 
 In case (\ref{eq:complete}), the algorithms provide full coverage of $\hat\Gamma_{\hat\Pi}$ and $T_{\hat\Pi}$ does not hold. In this case $ C_\Pi(T_{\hat\Pi},\hat{p})\neq 1,\;\forall \hat{p},$ and the theorem is satisfied.  

In case (\ref{eq:subset}), there is an $x$ for which $T_{\hat\Pi}$ holds. 

    Consider the following Algorithm~\ref{algoCtfinite}. Notice that $n$ shows up in only one place in Algorithm~\ref{algoCtfinite}. It can be represented in $\log_2n$ bits. Thus, the length of Algorithm~\ref{algoCtfinite} is $O(\log_2n)$. As $n$ grows,  $len(Algorithm~\ref{algoCtfinite}) \ll n$. This means that Algorithm~\ref{algoCtfinite} can never return since it is an algorithm that outputs $x$ but has length less than or equal to $n$ and thus $T_{\hat\Pi}$ does not hold. 
    
\begin{algorithm}[h]
\caption{Output $x$ if the truth $T_{\hat\Pi}$ holds in a finite space}\label{algoCtfinite}
\begin{algorithmic}[1]
\Require $n > 0\; \& \; len(x) \neq 0$
\State $i \Leftarrow 1$
\While{$i > 0$}
\For{\texttt{every prompt} $x\in \hat\Gamma_{\hat\Pi}$ \texttt{with} $len(x)\leq i $}
\For{\texttt{every proof} $\hat{p}$ \texttt{with} $len(\hat{p})\leq i $}
        \If{$C_{\hat\Pi}(T_{\hat\Pi}, \hat{p}) \;==\; 1$}
            \State \textbf{\texttt{output}} $x$
            \State \Return
        \EndIf
\EndFor
\EndFor

\State $i\Leftarrow i+1$
\EndWhile
\end{algorithmic}
\end{algorithm}

On the other hand, as we observed above, there is a prompt $x$ for which $T_{\hat\Pi}$ holds. Because Algorithm~\ref{algoCtfinite} traverses every possible $x$ and every possible proof $\hat{p}$ to check if $x$ belongs to $\hat\Gamma_{\hat\Pi}$, it would have found such an $x$ and would have output it and returned. This is a contradiction.  
\end{proof}

\section{Proofs of theorems from Section~\ref{sec:broader}}\label{sec:append}
The proof of Theorem~\ref{thm:geninfinite} is a straightforward application of the proof of Theorem~\ref{thm:infinite} in the setting of using the whole of $\Omega$. 

Similarly for the proof of Theorem~\ref{thm:genfinite} is a straightforward application of the proof of Theorem~\ref{thm:finite} in the setting of $\hat\Omega.$ 
\end{appendices}

\section{Acknowledgments}
The author thanks  Prof. Anup Rao for the illuminating video lecture on G\"odel's theorem (\url{https://www.youtube.com/watch?v=ViQ1PWJhcvk}).  

The graphics in Figure~\ref{fig:contextwindow} was produced by GPT-5\footnote{Certain equipment, instruments, software, or materials, commercial or non-commercial, are identified in this paper in order to specify the experimental procedure adequately. Such identification does not imply recommendation or endorsement of any product or service by NIST, nor does it imply that the materials or equipment identified are necessarily the best available for the purpose.}, with author's prompting.

\bibliographystyle{abbrvnat}
\bibliography{reference}

\end{document}